\newtheorem{theorem}{Theorem}
\newtheorem{lemma}{Lemma}
\newtheorem{definition}{Definition}
\def\@email#1#2{%
 \endgroup
 \patchcmd{\titleblock@produce}
  {\frontmatter@RRAPformat}
  {\frontmatter@RRAPformat{\produce@RRAP{*#1\href{mailto:#2}{#2}}}\frontmatter@RRAPformat}
  {}{}
}%
\begin{document}
\RestyleAlgo{ruled} 

\title[Random Walk in Random Permutation Set Theory]{Random Walk in Random Permutation Set Theory}
\author{Jiefeng Zhou}
    \altaffiliation[Also at ]{Yingcai Honors College, University of Electronic Science and Technology of China, Chengdu, 610054, China}
\affiliation{Institute of Fundamental and Frontier Science, University of Electronic Science and Technology of China, Chengdu, 610054, China}

\author{Zhen Li}
\affiliation{China Mobile Information Technology Center, Beijing, 100029, China}

\author{Yong Deng}
\email[\textbf{Corresponding author: }]{dengentropy@uestc.edu.cn, prof.deng@hotmail.com}
    \altaffiliation[Also at ]{School of Medicine, Vanderbilt University, Nashville, Tennessee, 37240, USA}
\affiliation{Institute of Fundamental and Frontier Science, University of Electronic Science and Technology of China, Chengdu, 610054, China}


\date{\today}

\begin{abstract}
Random walk is an explainable approach for modeling natural processes at the molecular level. The Random Permutation Set Theory (RPST) serves as a framework for uncertainty reasoning, extending the applicability of Dempster-Shafer Theory. Recent explorations indicate a promising link between RPST and random walk. In this study, we conduct an analysis and construct a random walk model based on the properties of RPST, with Monte Carlo simulations of such random walk. Our findings reveal that the random walk generated through RPST exhibits characteristics similar to those of a Gaussian random walk and can be transformed into a Wiener process through a specific limiting scaling procedure. This investigation establishes a novel connection between RPST and random walk theory, thereby not only expanding the applicability of RPST, but also demonstrating the potential for combining the strengths of both approaches to improve problem-solving abilities.
\end{abstract}

\keywords{Random Permutation Set,  Random walk,  Probability theory, Shannon entropy,  Brownian motion,  Wiener process}

\maketitle

\section{Introduction}\label{introduction}

Random walk models have been used to simulate various natural processes. These models are particularly useful for understanding molecular-level dynamics \citep{kessing2022longrange, ansari2012monte, thompson2022enhanced}, complex networks \citep{zhang2018measure, wang2021convex}, and so on. Correlated walks are specialized category of random walks. In correlated walks, the moving particles have a memory of their previous steps. This memory affects the direction of the next step, making the order of the steps important \citep{tojo1996correlated}.

The Dempster-Shafer evidence theory (DSET), also known as evidence theory, is a framework used for reasoning under uncertainty \citep{ap1967upper, shafer1976mathematical}. In contrast to probability theory, DSET utilizes mass functions to assign beliefs to subsets of a power set, rather than to individual outcomes in the sample space, allowing for a more flexible approach to defining belief assignments. Thus, DSET has been applied to various fields \citep{liu2023new, gao2023complex, liu2024learnable, yang2024novel, contreras2023belief, cui2022belief} However, DSET struggles to handle ordered information in certain real-world problems. To address this limitation, the random permutation set theory(RPST) is introduced \citep{deng2022random}. By introducing the concept of permutation events, RPST effectively considers the order of elements and expands the power set and mass function into the permutation events space (PES) and permutation mass function (PMF). Similar to Shannon entropy \citep{shannon1948mathematical} in probability and Deng entropy \citep{deng2016deng} in DSET, the RPS entropy is proposed by \cite{chen2023entropy} to quantify the uncertainty in RPST. 

One research area of focus in DSET and RPST is its physical implications. For instance, in DSET \cite{li2023normal} derived normal distribution from maximum Deng entropy \cite{kang2019maximum}, and \cite{zhao2024linearity} found intriguing linearity in Deng entropy. In RPST,  \cite{zhan2024random} expanded order information to encompass more complex relationships. \cite{zhou2022marginalization} used cooperative game to interpret RPST. \cite{deng2022maximum} discovered the PMF condition for maximum entropy in RPST. And \cite{zhao2023information} further demonstrated that the information dimension associated with this PMF condition is $2$, similar to the fractal dimension of Brownian motion. Moreover, the mean square distance, denoted as $\bar{r^{2}}$, in a Brownian motion is proportional to the time elapsed. In our previous research, it is showed that the limit form of maximum RPS entropy is $e \cdot (n!)^{2}$, exhibiting similarities to $\bar{r^{2}}$ \citep{zhou2024limit}. These collective discoveries hint at a potential connection between RPST and Brownian motion, or random walk in mathematics. 

In this paper, we conduct an in-depth analysis of RPST and construct random variables based on its properties. We then generate a random walk using these random variables. Finally, we demonstrate that this type of random walk shares similarities with a Gaussian random walk and can be converted into a Wiener process through scaling.

In general, this paper successfully build a bridge between RPST and random walk theory. Such correlation sheds light on the significance of RPST and its potential applications within random walk framework. This connection not only broadens the utility of RPST but also indicating the potential of combining strengths of both methodologies to enhance problem-solving capabilities.

The following parts of this article is structured as follows. \cref{sec:preliminary} introduces some key concepts related to this work. In \cref{sec:explore random walk}, the construction of RPST-generated random walk is presented. Finally, this article is summarized in \cref{sec:conclusion}. The proof of this work is attached in \cref{sec: proof}.

\section{Preliminaries}\label{sec:preliminary}
Some key concepts about this work are introduced in this section.

\subsection{Sample space and mass function}
\begin{definition}
\textnormal{(Sample space).} A sample space $\Omega$ is a mathematical set that contains all possible base events $E_i$, the cardinality of sample space is denoted as $|\cdot |$. the \textit{power set} of $\Omega$ is marked as $2^\Omega$.
\begin{align}
    \Omega = \left\{ E_1, E_2, E_3, \dots, E_n  \right\},\quad |\Omega|=n.
\end{align}
\end{definition}

\begin{definition}
\textnormal{(Mass function).} A mass function $\mathcal{M}(\cdot)$ a function that assigns a belief to each subset of a sample space $\Omega$, $\mathcal{M}: 2^\Omega \to [0,1]$, with the following constraints.
\begin{align}
    \sum_{i} \mathcal{M}(i) = 1, \quad \mathcal{M}(i) \geq 0,\quad \mathcal{M}(\emptyset) = 0.
\end{align}
\end{definition}

\subsection{Random permutation set theory}
By introducing ordered information, the random permutation set theory (RPST) successfully extends the scope of evidence theory. Some fundamental concepts of RPST are introduced below.

\subsubsection{Random permutation set}

\begin{definition}
\textnormal{(Permutation event space, PES).} The permutation event space (PES) is a set that contains all possible permutations of base events of $\Omega$. 

\begin{align}
    PES(\Omega) &= \left\{p_{ij} | i=0,1,\dots, n; j=1,2,\dots, P(n,i)   \right\} \nonumber \\
    & = \left\{ \emptyset, [E_1], [E_2], \dots, [E_n], [E_1, E_2], [E_2, E_1], \dots \nonumber \right. \\
    &\left. [E_{n-1}, E_{n}], [E_{n}, E_{n-1}], \dots,\nonumber \right. \\ 
&\left. [E_1, E_2, \dots, E_n], [E_n, E_{n-1}, \dots, E_1] \right\},
\end{align}

\end{definition}
where $P(n,i) = n!/(n-i)!$ is the $i$-permutation of $n$.

\begin{definition}
    \textnormal{(Permutation mass function, PMF).} A permutation mass function (PMF) $M$ is a mapping $M: PES(\Omega) \to [0,1]$, with constraints
\begin{align}
M(\emptyset)=0, \quad, \sum_{p \in PES(\Omega)} M(p) = 1.
\end{align}
\end{definition}

The \textbf{random permutation set (RPS)} consists of a permutation event from $PES(\Omega)$ and its associated permutation mass function (PMF) $M$: $RPS(\Omega) = \left\{A, M(A)| A \in PES(\Omega)\right\}$.

\subsubsection{RPS entropy}

Similar to entropy methods as uncertainty measure in evidence theory, RPS entropy has been proposed recently \citep{chen2023entropy}. What is more, the maximum RPS entropy and its limit form are also introduced and proved \citep{deng2022maximum, zhou2024limit}.

\begin{definition}
\textnormal{(RPS entropy).} The RPS entropy of a RPS $RPS(\Omega) = \left\{A, M(A)\right\}$ is defined as 

\begin{align}
H_RPS(M) = -\sum_{A\in PES(\Omega)}M(A) \log \left( M(A)/(F(|A|)-1  \right),
\end{align}
where $|A|$ is the cardinality of permutation event $A$, and $F(i)= \sum_{j=0}^{i}P(i,j)$.
\end{definition}
RPS entropy is fully compatible with Deng entropy \citep{deng2016deng} as used in evidence theory, and Shannon entropy \citep{shannon1948mathematical} in probability theory. Such uncertainty measures have been provided insights for other uncertainty measures like distance \citep{chen2023distance}, divergence \citep{chen2023permutation, zeng2023new}, information measures \citep{zhao2023information, kharazmi2023deng, ortiz2024reformulation} and so on.

\cite{deng2022maximum} delved and proved the following PMF condition of maximum RPS entropy:
\begin{align}
M(A) = \frac{F(|A|)-1}{\sum_{i=1}^{n}[P(n,i)(F(i)-1)]}.
\end{align}

The corresponding maximum RPS entropy for such PMF condition is then expressed as:
\begin{align}
H_{max-RPS} = \log \left(\sum_{i=1}^{n}[P(n,i)(F(i)-1)] \right).
\end{align}
 
 And the limit form of maximum RPS entropy can be simplified as \citep{zhou2024limit}:

\begin{align}
H_{max-RPS} \approx e \cdot (n!)^2.
\end{align}

This elegant result offers valuable insights into the physical significance of RPST. In a study of Brownian motion, \cite{einstein1956investigations} demonstrated that the mean square displacement is directly proportional to the elapsed time, expressed as $\bar{r^2} \propto t$. This prompts us to investigate a potential relationship between $(n!)^2$ and $\bar{r^2}$. Furthermore, \cite{zhao2023information} identified that the information dimension of the PMF associated with maximum RPS entropy is $2$, which aligns with the fractal dimension of Brownian motion. Collectively, these findings suggest a possible link between RPST and Brownian motion, or random walk theory.

\subsection{Random walk}
Random walk is a fundamental topic in probability theory. It is a type of stochastic process, which is a sequence of random variables that evolve over time. Random walk is formed by the successive summation of independent and identically distributed (i.i.d.) random variables. \citep{lawler2010random}. 

\subsubsection{General random walk}

\begin{definition}\label{def:random_walk}
\textnormal{(General random walk).} For $\forall t \in \mathbb{N}^{+}=\left\{ 1,2,3\dots \right\}$, let $S_{t} \in \mathbb{R}^{d}$, given a proper probability distribution $P: \mathbb{R}^d \to (0,1]$ and a group of i.i.d. random variables $\left\{ X_{t} | X_{t}\in \mathbb{R}^d , t \in \mathbb{N}^{+}\right\}$, the general random walk $S_{n}$ with step size distribution $P$ can be considered as the time-homogeneous Markov chain, defined by a summation of $\left\{ X_{t} \right\}$ :
\begin{equation}
    S_{n} = S_{0} + X_{1} +  X_{2} + \dots +X_{n},
\end{equation}
where $S_{0}\in \mathbb{R}^{d}$ is the starting point.
\end{definition}
One well-studied variant is the Gaussian random walk, which has a step size distribution of normal distribution $N(0, \sigma^2)$.

\subsubsection{Wiener process}
The Wiener process, also known as Brownian motion, is a fundamental concept in probability theory and stochastic processes. It represents the limiting behavior of a one-dimensional random walk as the step size approaches zero and the number of steps approaches infinity.
\begin{definition}
    \textnormal{(Wiener process).} For $\forall n \in \mathbb{Z}_{+}$, $W(t)$ is a Wiener process if 

\begin{align}
   W(t) =  \lim_{ n \to \infty } W_n(t) = \frac{1}{\sqrt{n}} \sum_{1 \leq i \leq \lfloor nt \rfloor} \xi_i, t \in [0,1], \xi \sim N(0,1).
\end{align}\label{eq:limit_wiener}
\end{definition}
A Wiener process $\{W(t), t > 0 \}$ has the following properties:
\begin{itemize}
    \item $W(0)$ = 0;
    \item for $0 \leq s <t$, the increments $W(t)-W(s) \sim N(0, t-s)$;
    \item For any non-overlapping interval $[s_i,t_i]$, the group of random variables $W(t)-W(s)$ are independent of each other;
    \item $W(t)$ is almost surely continuous in $t$.
\end{itemize}
Those properties will be analysed in our proposed stochastic process.

\section{Explore random walk in Random Permutation Set}\label{sec:explore random walk}

In this section, we will give an in-depth exploration of random walk in Random permutation set theory. Now let us review the motivation discussed earlier: the RPST brings order information of events to expand evidence theory, and the order information can be viewed as time sequence information since time has fixed order and flows in one direction. And the term "random" inspired us to find the connection between RPST and random walks in stochastic process.

When generating a random walk based on RPST, it is important to consider the order information present in RPST. For convenience, we use list $\left[a_1, a_2, \dots, a_n \right]$ to express order information, and simulate two-dimensional random walk. Firstly, the random variable should be defined.

One situation where the ordered information is important is the matrix multiplication, because matrix multiplication does not hold commutative property, i.e. $AB = BA$ for most of the matrix $A, B$. This inspired us to use matrix to generate a random variable. 

Given a permutation sequence, $PerS_{n \times 1}=\left(a_1, a_2, \dots, a_n \right)^{\top}$ and a arbitrary vector $\vec{V}_{0}=(x,y)^{\top}$, we want to output a random variable vector $\vec{V}_{i}=(V_{x}, V_{y})^{\top}$top. This can be done by the following computation. First we randomly generated some inversible matrices $M_{N} = \left( M_{1}, M_{2},\dots,M_{i},\dots M_{n} \right)^{\top}$, then we compute $Vec_{i}=a_{i} \cdot M_{i}\vec{V}_{0}$ for each $M_{i}$ and each $a_{i}$, getting $n$ component vectors $Vec_{i}$. Then we have a summation vector $\vec{V}_{i} = \sum_{i}Vec_{i}$ by adding all component vectors.

For the convenience of illustration, we use two-dimensional rotation matrix $R(\theta)_{2 \times 2}$ to replace $M_{i}$ in the following way: 

\begin{align} 
R(\theta)_{2 \times 2} &  = \begin{pmatrix}
\cos \theta  & \sin \theta  \\
\sin \theta  & \cos \theta
\end{pmatrix}, \\
M_{N} & =\left( R(\theta_{0}), R^{2}(\theta_{0}), \dots , R^i(\theta_{0}), \dots,R^n(\theta_{0})\right)^{\top}.  \\
\end{align}

We the use the following algorithm to generate a random variable.

\subsection{Generating random variables}\label{sec: RVG}

\begin{definition}\label{def:RVG}
(Random Variable Generator, RVG). \textnormal{Given a positive integer $n$, the random variable generator (RVG) is defined by \Cref{alg:RVG}.}
\end{definition}

\begin{algorithm}
\caption{Random Variable Generator}\label{alg:RVG}
\LinesNumbered
\SetKwFunction{RandomChoice}{RandomChoice}
\KwResult{A vector in perpendicular coordinates $(V_x, V_y)$ representing the addition of component vectors.}
\KwIn{Integer $n$ indicating the length of the set.}
\KwOut{Vector in perpendicular coordinates $(V_x, V_y)$.}
$M \gets [0]_{(n!, n)}$\\
\tcc{Initialize a zero matrix $M$ with dimensions $(n!, n)$.}
$S \gets \{1, 2, \dots, n\}$\\
\tcc{Initialize the set $S = \{1, 2, \dots, n\}$}
\For{each permutation list $p_i$ in all permutations of $S$.}{
    $M_{i} \gets p_i$ \\
}
$p_s \gets \RandomChoice (M)$  \\
\tcc{Select a possible permutation sequence $p_s$ from matrix $M$ evenly based on uniform distribution.}
$V_x, V_y \gets  0$ \\
\tcc{Initialize sum of x, y components vectors.}
\For{$i \gets  1$ \KwTo $n$}{
$\theta_i \gets  \frac{2 \pi}{n} \cdot i$\\
$x_i \gets  a_i \cdot \cos(\theta_i)$\\
$y_i \gets  a_i \cdot \sin(\theta_i)$\\
$V_x \gets  V_x + x_i$\\
$V_y \gets  V_y + y_i$\\
}
\Return{$(V_x, V_y)$}\\
\end{algorithm}

\Cref{alg:RVG} takes an integer $n$ as input, outputting a vector in perpendicular coordinates marked as a random variable. The $n$ denotes the number of component vectors, and the cardinality of a possible permutation sequence $p_i=\left(a_1, a_2, \dots, a_n \right)$. The reason of choosing possible permutation sequence will be discussed in \cref{sec:simulating random variables}. After the possible permutation sequence $p_i$ is selected, the numbers in it indicate the length of each component vector. As for the direction of each component vector, we choose to divide $2\pi$ into $n$ piece evenly, so each component vector $\vec{v_j}$ can be defined as $\vec{v_j}=(a_j, 2\cdot j \cdot \pi/n)$. Then we output the random variable $(V_x, V_y) = \vec{V_i}$ by adding all component vectors.

As shown in \cref{fig:random_variable_simulation}, we simulate $20,000$ random variables with $n$ ranging from $n=1$ to $n=12$. The index at the top of each sub-figure denotes as the number of possible random variable. (When $n \geq 7$, the number may be inaccurate due to limited simulation.), while the color in each node represents the frequency in simulation. And all numerical results are rounded to eight decimal places. 

For $n=1,2,3$, there are $1!, 2!, 3!$ kinds of values of random variables, respectively. While for $n=4$, there are not $4!=24$ but $16$ different values of random variable, as shown in the figure. This can be predicted, because when $n=4$, each component vector has a fixed direction, which are $\pi/2$, $\pi$, $3\pi/2$, and $2\pi$ respectively. This means each of these vectors points either horizontally or vertically. So each sum in the resulting vector's $x$ or $y$ direction can be produced in four ways: $[1(2-1, 3-2, 4-3), 2(3-1, 4-2), 3(4-1)]$, which yields four different combinations: $(1,3), (3,1), (2,2), (1,1)$. And since each of four ways implies a rotation direction, which in turn leads to the x and y coordinates of the vector being multiplied by either $+1$ or $-1$. Thus, there are $4 \times 4 = 16$ unique possibilities. 

This explanation can be extended to the cases of $n=5$ and $n=6$. However, the number of possible random variables grows rapidly when $n \geq 7$, compared with the simulation of $n=6$. This is intuitive due to the rapid growth of factorial.

We examined the specifics of each random variable simulation concerning $V_x$ and $V_y$. The histogram in \cref{fig:hist of value} displays the distribution of $V_x,V_y$ values in $20,000$ simulations.The x-axis and y-axis in each sub-figure represent the value and frequency, respectively. The symmetrical distribution of frequencies in each interval suggests that the expected values of $V_x, V_y$ should be zero, a finding supported by the results in \cref{fig:2-Mean_variance_plot}. It is also anticipated that as the number of simulations, denoted by $n$ tends towards infinity, $V_x, V_y$ will converge to a normal distribution.

Another important statistic property is the variance, \cref{fig:2-Mean_variance_plot} shows the variance of $V_X, V_y$ in different value of $n$. As $n$ increases, the variance of both $V_x$ and $V_y$ will grow like binomial function, which means $Var(V_{x,y}) \propto (n^2 + n)$. This variance growing speed property is another necessary feature of random walk. In \cref{fig:2-Mean_variance_plot}, we compared the variance of both $V_x$ and $V_y$, the linearity between them indicates that $V_x$ and $V_y$ are independent and symmetrical, ensuring this simulation method is like Wiener process, which is invariant to rotations.

\begin{figure}
	\centering 
	\includegraphics[width=0.45\textwidth]{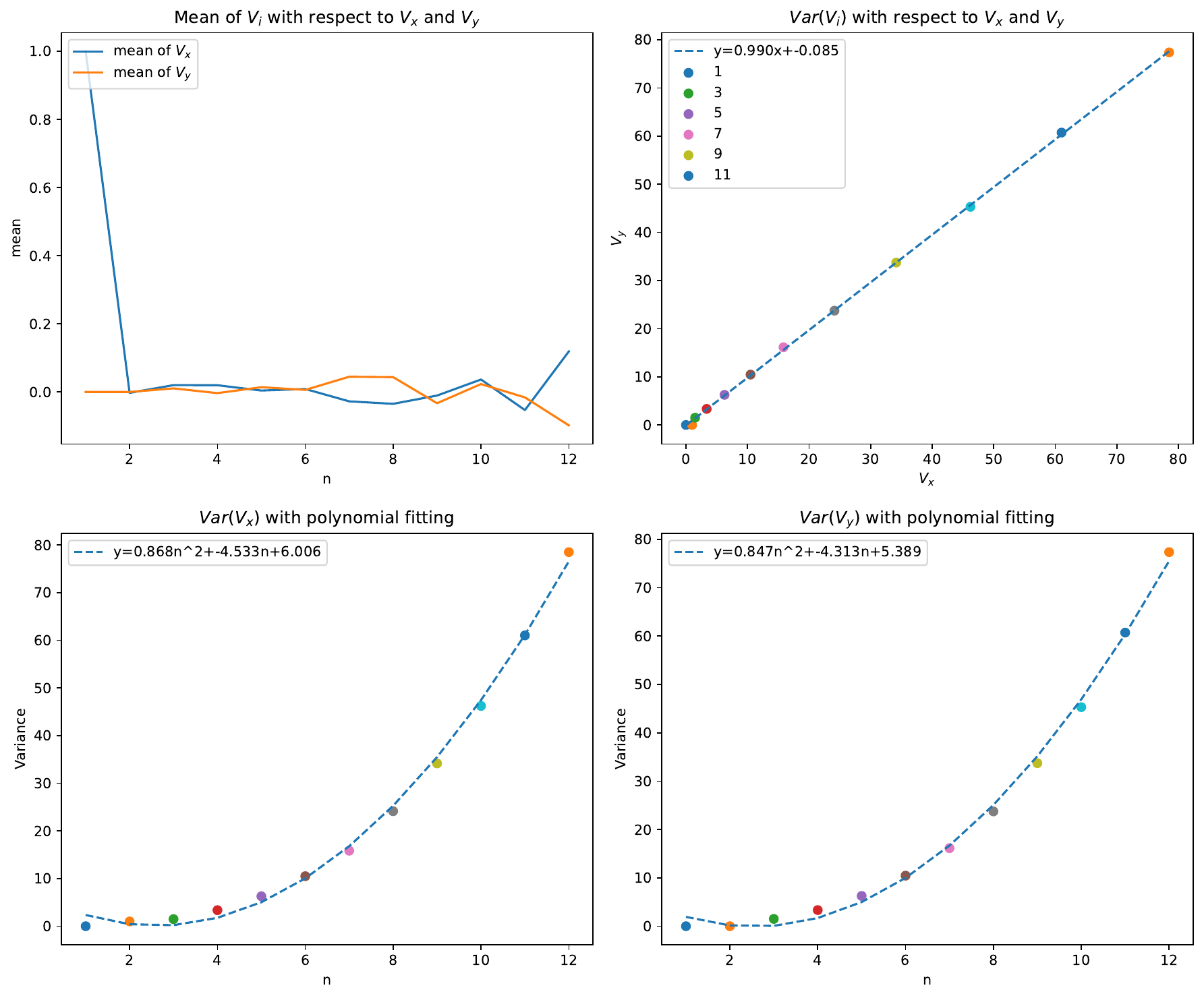}	
	\caption{Variance and mean value of $V_i$ with respect to $V_x$ and $V_y$.} 
	\label{fig:2-Mean_variance_plot}%
\end{figure}

\begin{figure*}
	\centering 
    \includegraphics[width=0.9\textwidth]{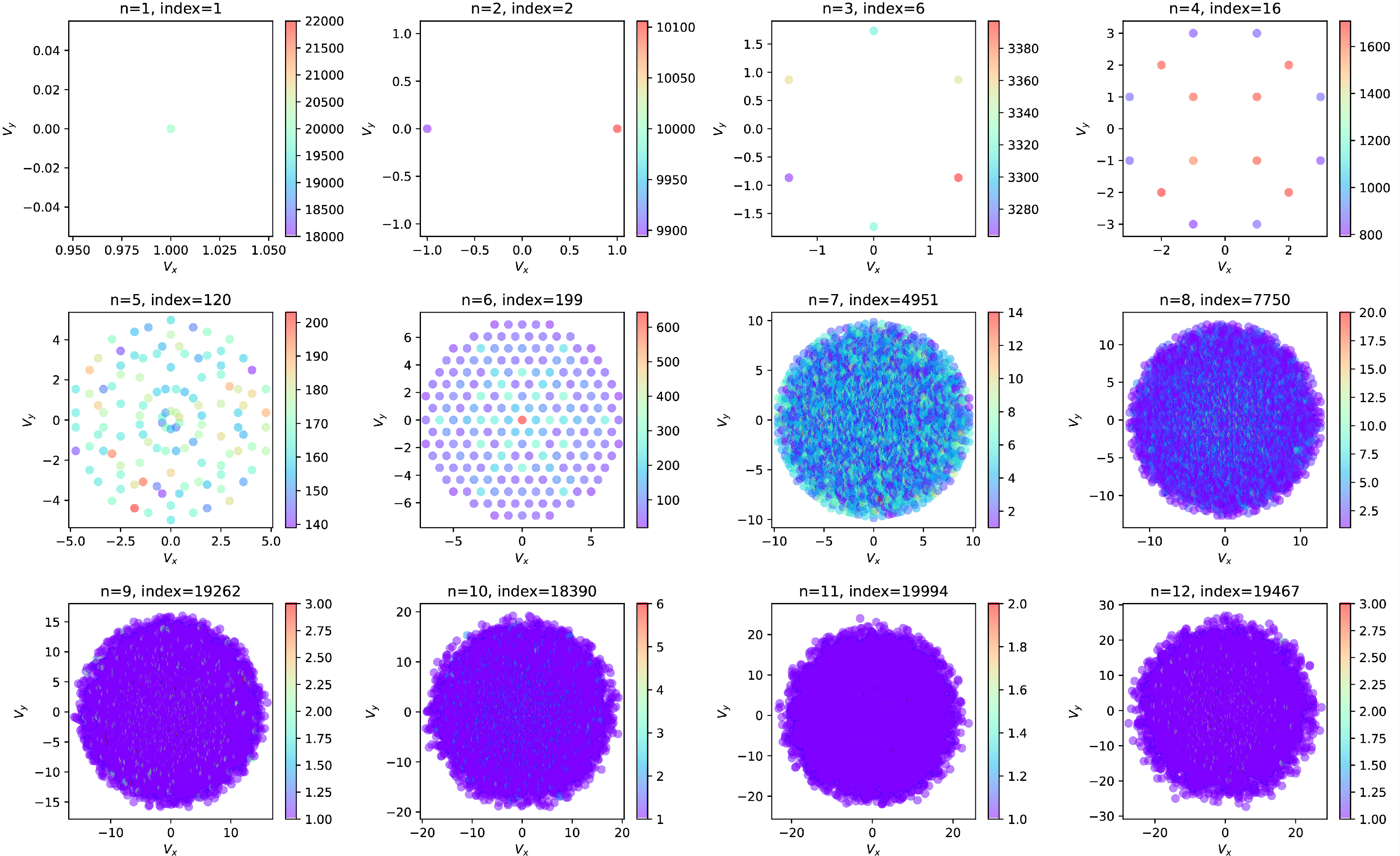}
	\caption{Visualization of random variables generation with n ranging from 1 to 12 in 20,000 simulations. Each point on the graph represents a possible random variable, with the color indicating the frequency of occurrence.} 
 \label{fig:random_variable_simulation}
\end{figure*}

\begin{figure*}
	\centering 
    \includegraphics[width=0.9\textwidth]{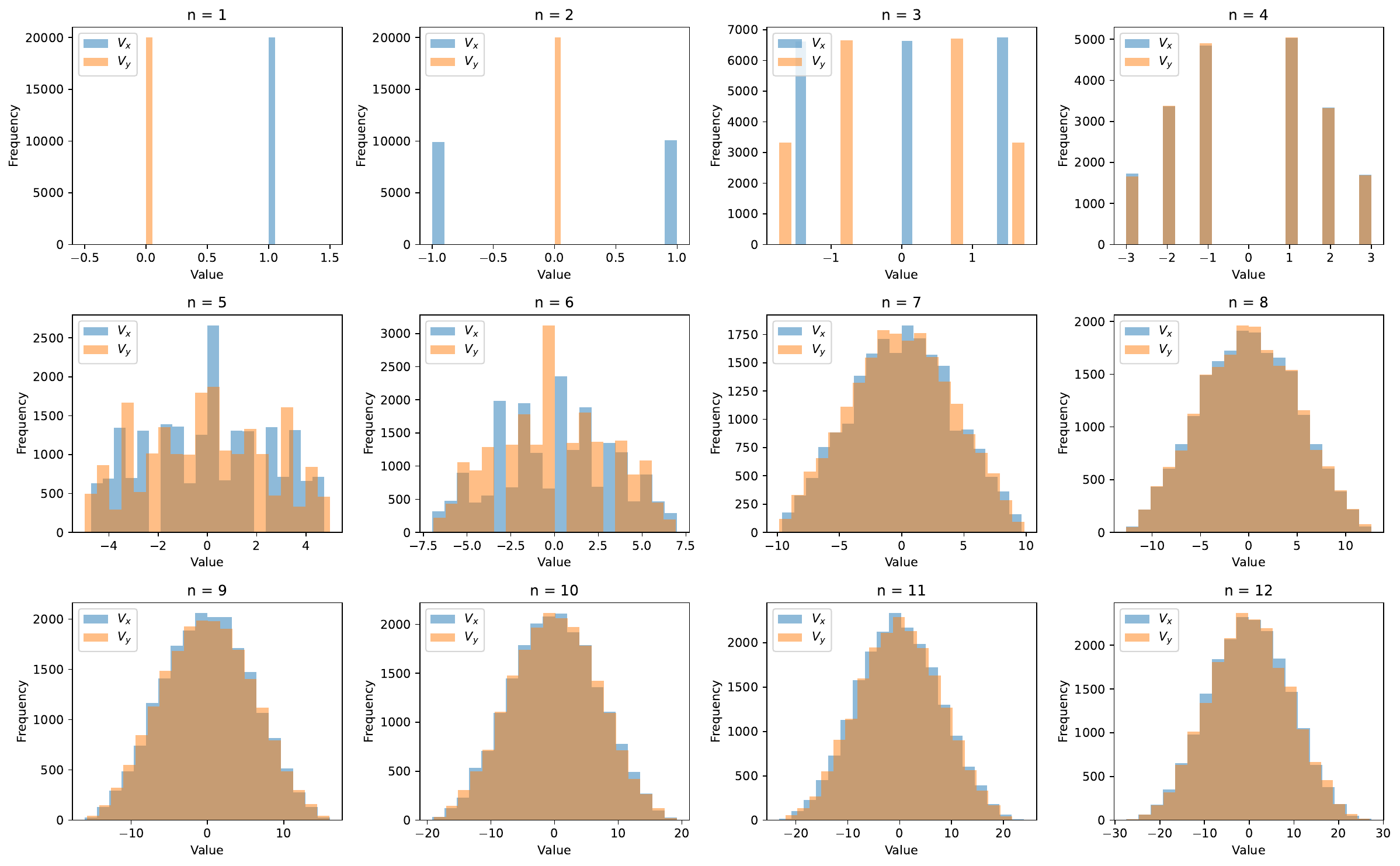}
	\caption{Histogram of random variables with respect to $V_x$ and $V_y$.} 
 \label{fig:hist of value}
\end{figure*}

\subsection{Simulating random variables for random walk}\label{sec:simulating random variables}

One common way to simulate random walk is adding a sequence of i.i.d. random variables. For example $\vec{V_i}$ form a normal distribution $N(\mu, \sigma^2)$, where $\mu$ and $\sigma$ are the mean and standard deviations of the normal distribution, respectively. Then the sum of normally distributed random variables is a random walk \citep{lawler2010random}.

\begin{align}
    S_t = \sum_{i=0}^{t} \vec{V_i},
\end{align}\label{eq:random walk generation}

where $\vec{V_i}$ is marked as a step, $V_0$ is the starting value of the random walk, and $t$ is the number of steps.
Inspired by such method, we tend to use such method with random variables simulated from RPST to generate random walk.

To generate a ideal random variable $\vec{V_i}$, we first should determine the length of the possible permutation sequence, i.e. $n$ in RVG.

Based on the maximum RPS entropy, an natural idea is to delve a distribution from RPST, and use it as probabilities associated with each possible permutation sequence. Given a fixed set $\Lambda = \left\{\lambda_1, \lambda_2, \dots, \lambda_n \right\}$, the belief assigned to possible permutation sequence whose cardinality is identical is the same. When the length of a possible permutation sequence is determined, then we can select one of the possible permutation sequence evenly as our probabilities association method, and that's why we use uniform distribution as the probabilities associated with each possible permutation sequence in \cref{sec: RVG}.

\begin{definition}\label{def:RPS_distribution}
    
\textnormal{(RPST distribution).} Given a maximum length of permutation sequences $N$, there are $P(N,n)$ choices to select a possible permutation sequences with length of $n$, then the possibility of selecting $n$ as the length of possible permutation sequence combined with the maximum RPS entropy, is defined as RPST distribution.

\begin{equation}\label{eq:RPS_distribtuion}
    P_{RPS}(n|N) = P(N,n) \cdot \mathcal{M}_{i=n,j} = \frac{P(N,n)[F(n)-1]}{\sum_{i=1}^{N}\left[ P(N,i)(F(i)-1) \right] }.
\end{equation}

\end{definition}

To illustrate the the validity of the proposed method, we consider the following way to select a possible length $n$ for permutation sequence with the same probability:

\begin{definition}\label{def:Permu_distribution}
    
\textnormal{(Permutation distribution).} Given a maximum length of permutation sequences $N$, there are $\sum_{i=1}^{N} i!$ kinds of permutation sequences, the permutation distribution is defined to choose a possible length $n$ based on the number of permutation cases.

\begin{equation}
    P_{Per}(n|N) = \frac{P(N,n)}{\sum_{i=1}^{N}P(N,n)}=\frac{P(N,n)}{F(N)-1}=\frac{P(N,n)}{\lfloor e \cdot N! \rfloor},
\end{equation}\label{eq:Permu_distribution}
\end{definition}

i.e., the possibility of selecting a possible sequence length $n$ is in proportion to the magnitude of permutation $P(N,n)$. When the  In other words, given a maximum length of permutation sequences $N$, the probability of selecting a possible permutation sequence from all $\lfloor e \cdot n! \rfloor$ sequences is $1/(\lfloor e \cdot n! \rfloor)$.

\begin{table*}
\centering
\begin{tabular}{l c c c c c c c c} 
\toprule
  &  $N$  & $n=N-5$  & $n=N-4$ & $n=N-3$ & $n=N-2$ & $n=N-1$ & $n=N$ & $\sum_{n=N-5}^{N}P(n|N)$  \\
\hline
\multirow{7}{*}{$P_{Per}(n|N)$} & 6 & 3.0700e-3 & 1.5340e-2 & 6.1350e-2 & 1.8405e-1 & 3.6810e-1 & 3.6810e-1 & 1.0000e-0 \\
 & 10 & 3.0700e-3 & 1.5330e-2 & 6.1310e-2 & 1.8394e-1 & 3.6788e-1 & 3.6788e-1 & 9.9941e-1 \\
 & $\vdots$ \\
 & 18 & 3.0700e-3 & 1.5330e-2 & 6.1310e-2 & 1.8394e-1 & 3.6788e-1 & 3.6788e-1 & 9.9941e-1 \\
 & $\vdots$ \\ 
 & $\infty$  & $\frac{1}{5!e}$ & $\frac{1}{4!e}$  & $\frac{1}{3!e}$  & $\frac{1}{2!e}$  & $\frac{1}{e}$  & $\frac{1}{e}$  &  $\frac{163}{60e}$ \\
 \addlinespace
  \hline
\multirow{7}{*}{$P_{RPS}(n|N)$} & 6  & 0.0000e-0 & 7.0000e-5 & 1.0800e-3 & 1.3820e-2 & 1.4035e-1 & 8.4468e-1 & 1.0000e-0 \\
 & 10 & 0.0000e-0 & 1.0000e-5 & 2.1000e-4 & 5.0200e-3 & 9.0430e-2 & 9.0433e-1  & 1.0000e-0  \\
 & $\vdots$ \\
 & 18 & 0.0000e-0 & 0.0000e-0 & 3.0000e-5 & 1.5500e-3 & 5.2550e-2 & 9.4587e-1 & 1.0000e-0 \\
 & $\vdots$ \\ 
 & $\infty$  & 0 & 0 & 0 & 0 & 0 & 1 & 1 \\
\bottomrule[.1em]
\end{tabular}
\caption{The last $6$ elements' probability assignment of distribution $P_{Per}(n|N)$ and $P_{RPS}(n|N)$, all results are rounded to $5$ digits.
}
\label{tab:probability distribution}
\end{table*}

\begin{figure}
	\centering 
	\includegraphics[width=0.9\textwidth]{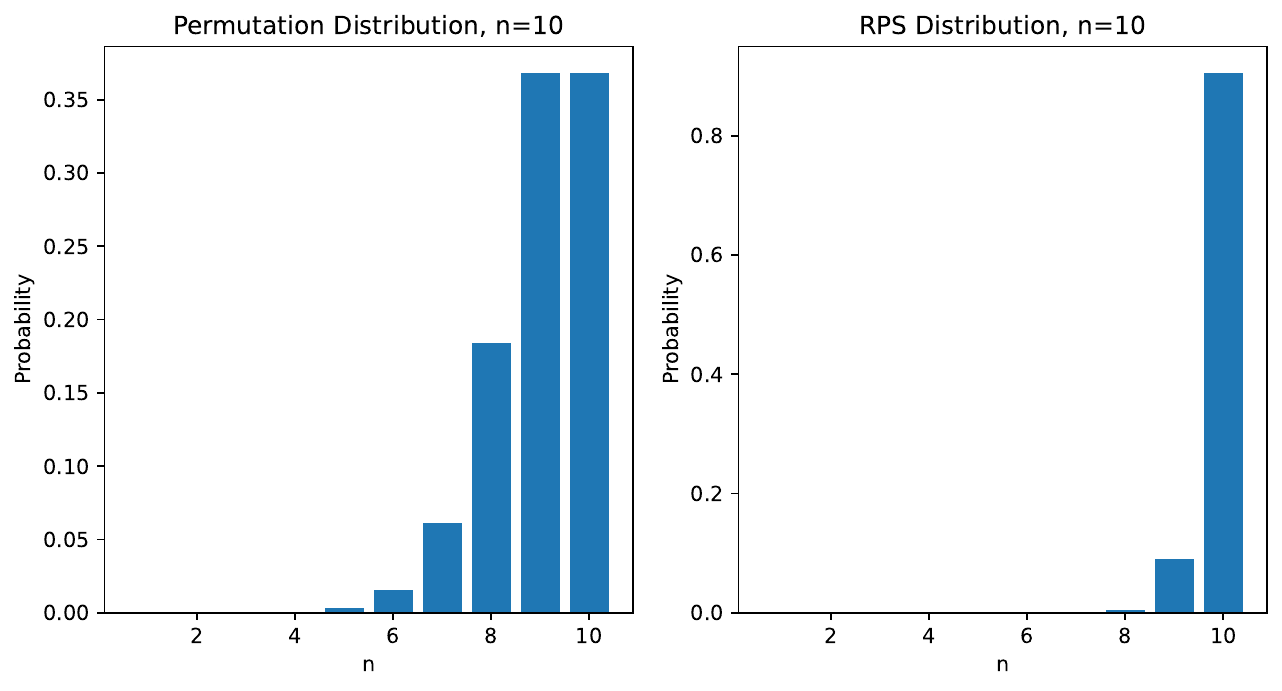}	
	\caption{Discrete probability distribution of $P_{Per}(n|N)$ and $P_{RPS}(n|N)$ with $N=10$.} 
	\label{fig:probability distribution}%
\end{figure}

We plot the discrete probability distribution of $P_{Per}(n|N)$ and $P_{RPS}(n|N)$ with $N=10$ in \cref{fig:probability distribution}. And \cref{tab:probability distribution} lists the details of the last $6$ elements' probability assignment for those two distribution. Based on above, it is obvious that the last $6$ elements take up most of the probability assignment. Thus, when selecting a possible length for permutation sequences, $P_{Per}(n|N)$ tends to choice $n$ from $[N-5, N]$, while $P_{Per}(n|N)$ like to assign most of the probability to $n=N$ with a bigger $N$. The limit form of $P_{Per}(n|N)$ and $P_{RPS}(n|N)$ will be discussed in \cref{sec: proof}.

\subsection{Generating random walk with random variables}

Using \cref{eq:random walk generation} as a construction of generating random walk, we design the following algorithm to generate random walk with random variables.

\begin{definition}\label{def:RWG}
(Random Walk Generator, RWG). \textnormal{Given a positive integer $T$ denoted as time steps, the maximum length of permutation sequence $N$, and the distribution method $P(n|N)$, the random walk generator (RWG) is defined by \Cref{alg:RWG}.}
\end{definition}

\begin{algorithm}
\caption{Random Walk Generator}\label{alg:RWG}
\LinesNumbered
\SetKwFunction{RVG}{RVG}
\SetKwFunction{GenLen}{GenLen}
\KwResult{A matrix $T_{t \times 2}$ representing the discrete time stochastic process.}
\KwIn{An integer $t$ indicating the number of time steps, a selection method $P$ ($P_{Per}(n|N)$ and $P_{RPS}(n|N)$), the maximum sequence length $N$.}
\KwOut{A matrix $T$ with size of $t \times 2$.}
$Len_Set \gets \GenLen (P, N)$\\
\tcc{Create a set of lengths with constraints to maximum length of $N$, the set is used for generating random variables.}
\For{$i = 2$ \KwTo $t$}{
    $Temp\_Vec = \RVG (Len\_Set_{i-1})$
    $T_{i} \gets T_{i-1} + Temp\_Vec$ \\
}
\tcc{Generate value of random walk at each time step.}
\Return{$T$}\\
\end{algorithm}

\begin{figure*}
	\centering 
    \includegraphics[width=0.9\textwidth]{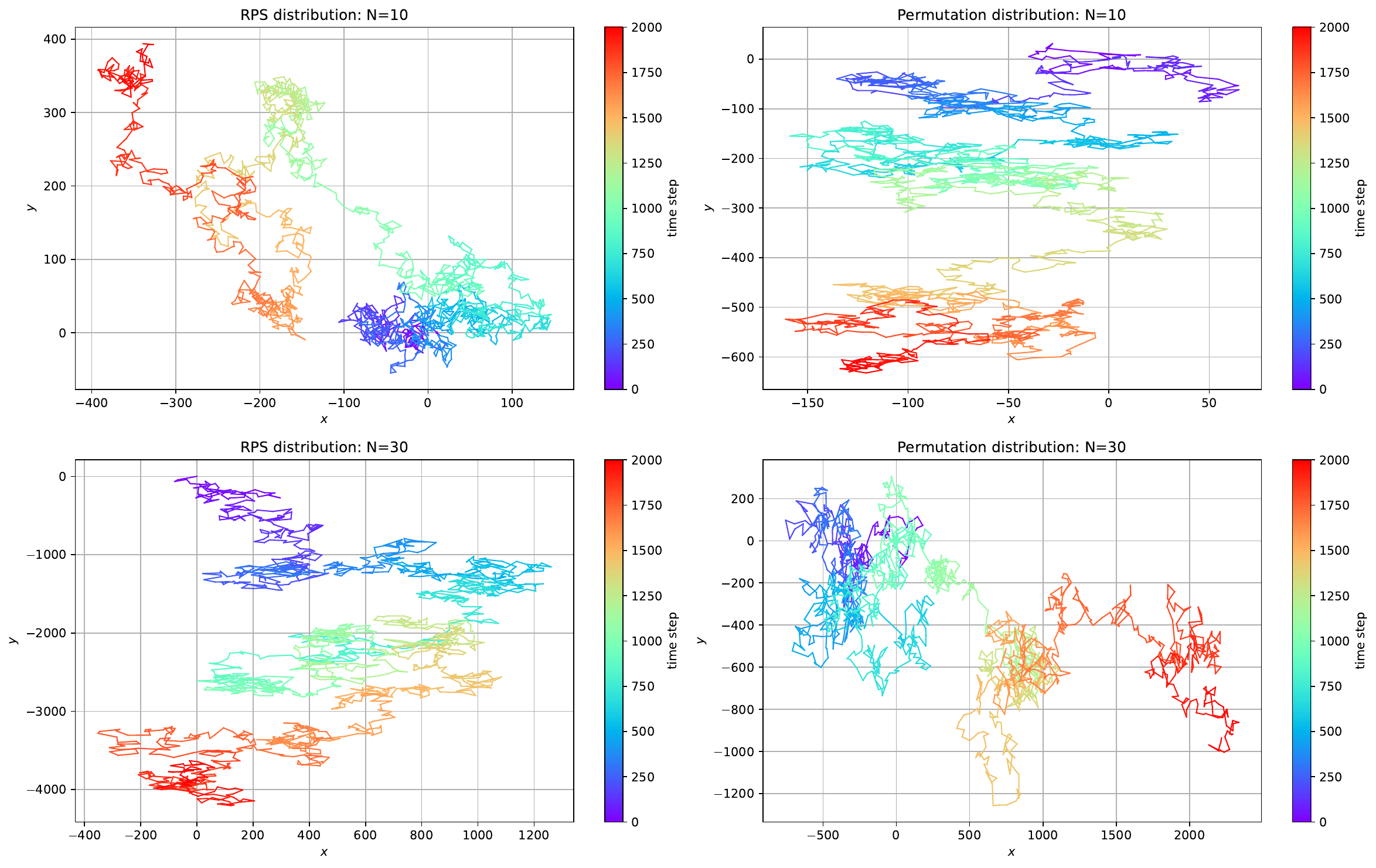}
	\caption{Visualization of random walk from distribution $P_{Per}(n|N)$ and $P_{RPS}(n|N)$, where color map is showing the time steps.} 
 \label{fig:random walk visualization}
\end{figure*}

\Cref{alg:RWG} takes the number of time steps $t$, a distribution $P$ used for selection method, and $N$ indicating the maximum length of permutation sequence, as inputs, returning a matrix $T$ storing values at each time step. 

\Cref{fig:random walk visualization} shows results across different $N$ and $P$. The color map illustrates the temporal evolution of the random walk's trajectory. As $N$ increases, the discrete-time stochastic process $T$, which is generated from RPST distribution $P_{RPS}$, shows a motion pattern resembling random walk, characterized by randomly distributed points in space.

Comparing the results of $N=10$ and $N=30$, the motion exhibits stochastic self-similarity as in random walk. This is because at each time step, this RPST-generated motion will walk through the space for each $n$ directions ($n$ being the possible length of permutation sequence with maximum length of $N$). These $n$ paths can be decomposed into $x$ and $y$ directions in perpendicular coordinates, similar to the two-dimensional random walk where the walker randomly chooses one of two perpendicular directions with a fixed step size.

To compare the proposed method's limit scale form with the Wiener process, we employ a method similar to \cref{eq:limit_wiener}, to simulate the limit scale form of the RPST-generated random walk.

\begin{align}
    RW_{n,N}(t) = \frac{\sqrt{\varrho}}{N\sqrt{N}} \frac{1}{\sqrt{n}} \sum_{1 \leq i \leq \lfloor nt \rfloor} \vec{V_i} , t \in [0,1], 
\end{align}
where $N$ is the maximum length of a permutation sequence, $n$ is the number of time step, and $\varrho$ is a variance control factor that scales the variance of RPST-generated random walk . As $N,n \to infty$, $RW_{n,N}(t)$ toward to a Wiener process, the details will be discussed in \cref{sec: proof}.

The only difference to Wiener process as a limit scale form of random walk, is that the re-scaling factor $\sqrt{\varrho}/({N\sqrt{N})}$. this is due to the fact that the random variables generated from RPST have variance growing like binomial function, as shown in \cref{fig:2-Mean_variance_plot}. So this redesigned re-scaling factor ensures the variance of random variables is invariant to $n$.

Since simulations on computers are actually discrete, and for convenience of illustration, we generate the proposed stochastic process $T$ with time step of $2,000$, and re-scale to $RW_{n,N}(t)$ with setting $\varrho = 24$.

\begin{figure*}
	\centering 
    \includegraphics[width=0.9\textwidth]{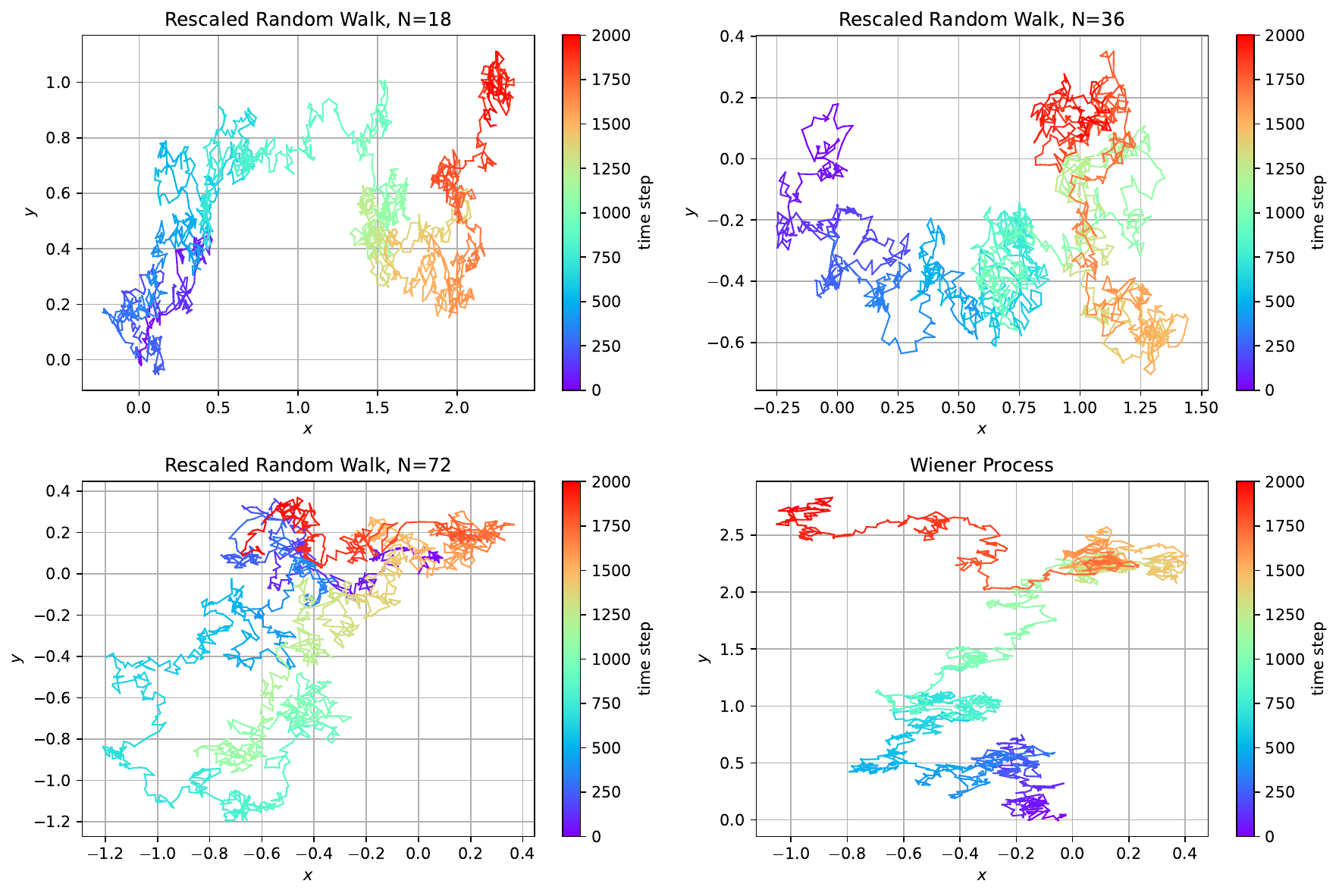}
	\caption{Scaled Random walk with different $N$ and Wiener process with time steps $2,000$ and variance control factor $\varrho=24$.} 
 \label{fig:rescaled random walk with wiener process}
\end{figure*}

\begin{figure}[ht]
	\centering 
	\includegraphics[width=0.9\textwidth]{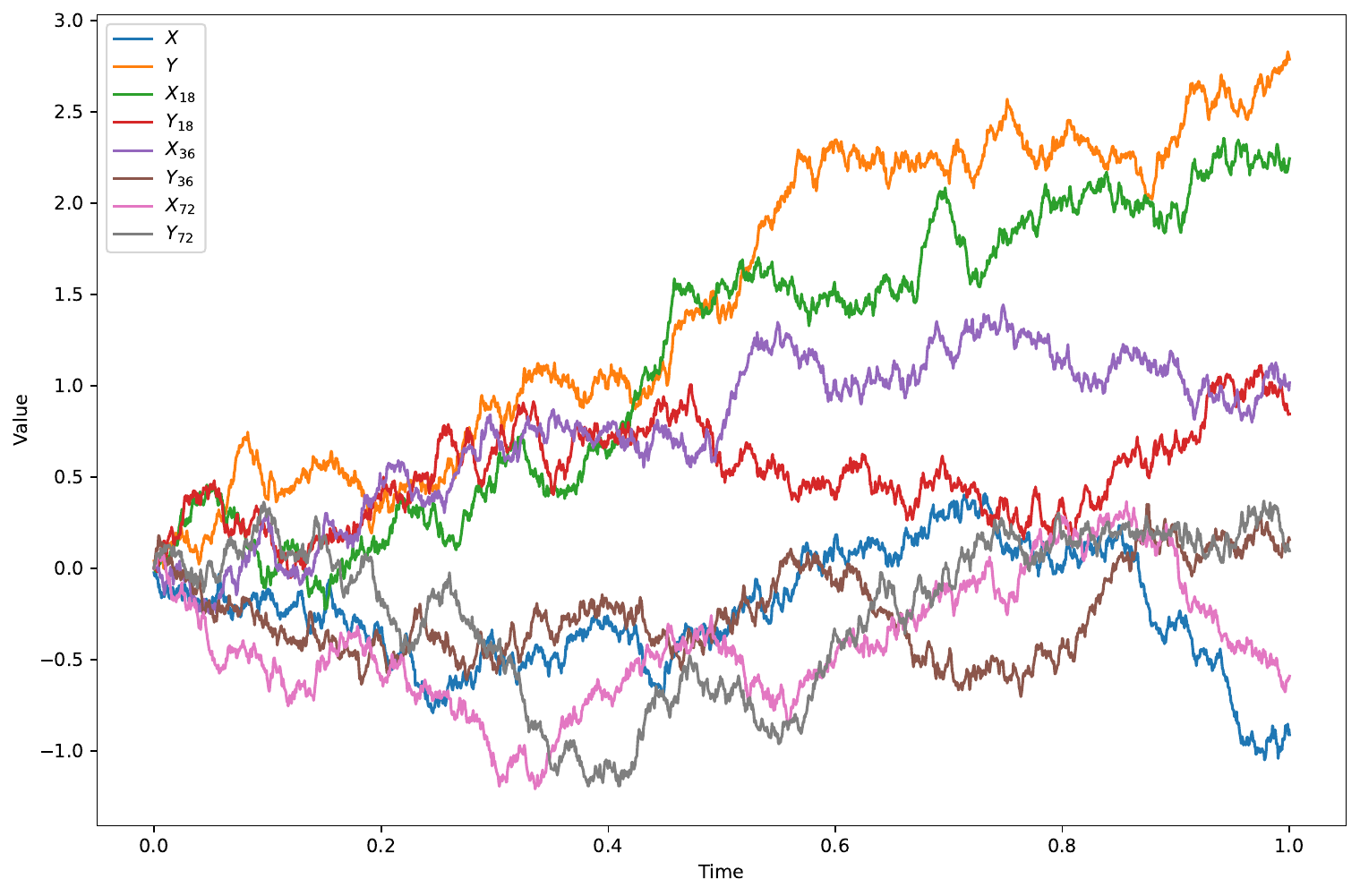}	
	\caption{Component value of random walk and Wiener process of \cref{fig:rescaled random walk with wiener process}, the $2,000$ time steps are converted to time $t\in [0,1]$ for convenience and $\varrho$ is set to 24 $\varrho=24$.} 
	\label{fig:component_plot}%
\end{figure}

As shown in \cref{fig:rescaled random walk with wiener process}, the scaling RPST-generated random walk  do visually seem the same as standard Wiener process, not only the randomly walked point path, but also the boundaries. And more details about the component values about $X$ and $Y$ axis are plotted in \cref{fig:component_plot}. Based on this result, it seems that the proposed stochastic process converges to Wiener process as $N$ increases. However, additional verification is required before reaching a definitive conclusion.

In \cref{component_variance_re}, the mean and variance values of various stochastic processes are compared to the Wiener process across different $\varrho$ values, with time steps and sample processes limited to $100$ and $200$, respectively. And the $5$ interval are set to $[0,20)$, $[0,40)$, $[0,60)$, $[0,80)$, $[0,100)$ to minimize errors. 

Results show that all proposed methods with different $N$ exhibit properties similar to the Wiener process in terms of mean value and variance, where the mean value is zero and variance scales with time steps. Compared the sub-figures in \cref{component_variance_re}, the difference lies in the slope of variance, which is why we introduce the variance control factor $\varrho$ to regulate the variance of the proposed stochastic process. This ensures that the variance of the process aligns with that of a Wiener process.

From the results and analysis presented above, it is evident that as the sample size ($N$) increases, the RPST-generated random walk converges to a Wiener process, which is the limit scale form of a two-dimensional random walk. This demonstrates the successful derivation of a random walk from RPST.

\begin{figure}[ht]
	\centering 
	\includegraphics[width=0.9\textwidth]{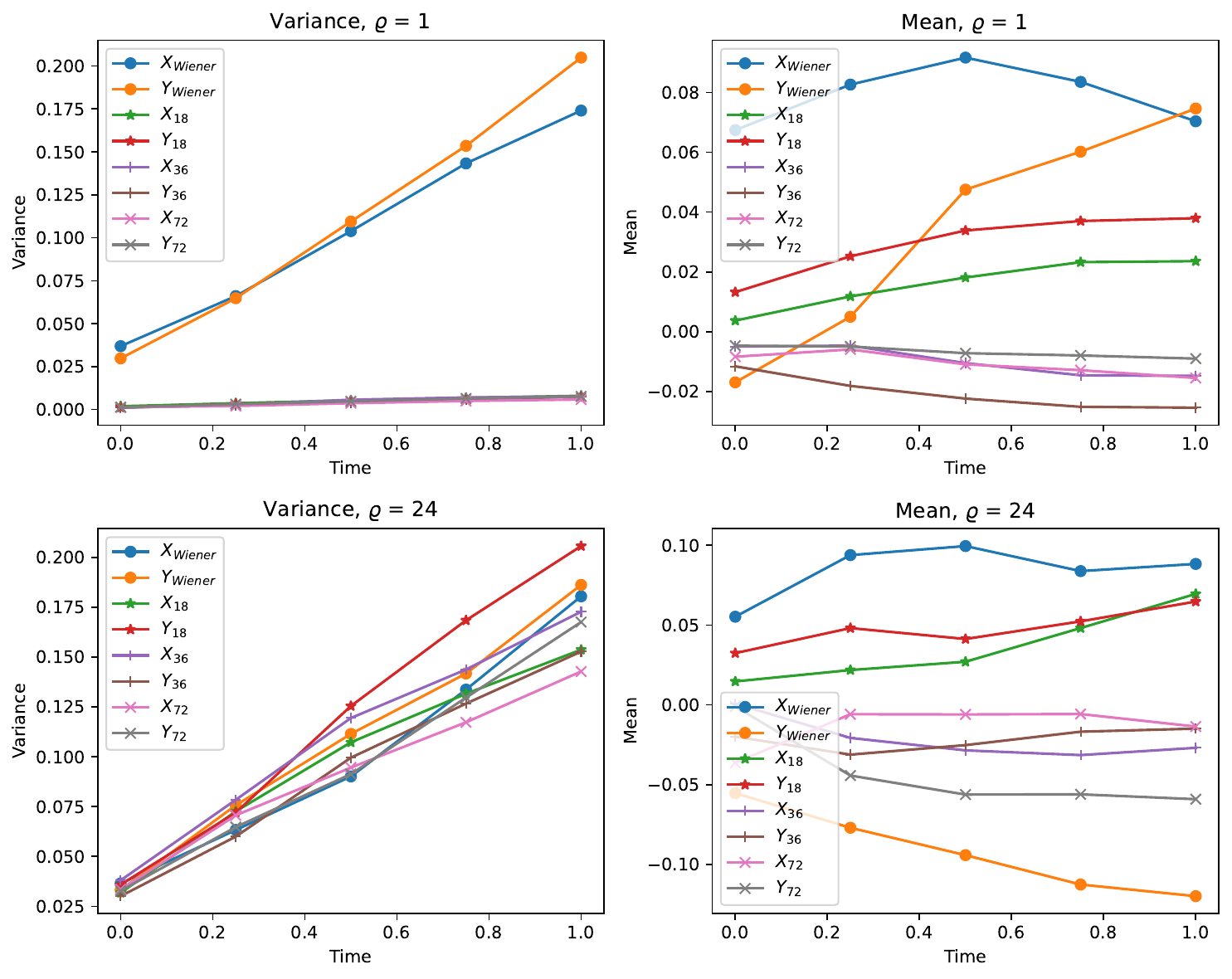}
	\caption{Variance (left) and mean (right) value of Wiener process $W(t)$ and limit scale form of random walk from RPST $RW_{n,N}(t)$ across various $N$ and variance control factor, time steps and number of simulations are set to $100, 200$, respectively. } 
	\label{component_variance_re}%
\end{figure}

\section{Conclusion}\label{sec:conclusion}

Random permutation set theory (RPST) is a promising extension of evidence theory that introduces ordered information to its reasoning framework. The indexed order in RPST can be viewed as a time series, which motivates the exploration of a connection between RPST and random walk, a fundamental topic in probability theory. This paper demonstrates that RPST can be used to construct a Gaussian random walk and, in the limit, a Wiener process. The established link between RPST and random walk provides insights into the physical meaning of RPST and enables its application in existing random walk domains. This not only expands the application scope of RPST but also provides insights for combination the strengths of both RPST and random walk for problem-solving.

Future investigations should concentrate on overcoming the limitations of current study. This may involve elucidating the physical implications of RPST through its association with random walks. Subsequently, the application of this random walk model to real-world scenarios, such as epidemiological modeling, financial market analysis and machine learning algorithm, could be explored.

\section*{ACKNOWLEDGMENTS}
The work is partially supported by National Natural Science Foundation of China (Grant No. 62373078).

\section*{AUTHOR DECLARATIONS}

\subsection*{Conflict of Interest}
The authors have no conflicts to disclose.

\subsection*{CRediT authorship contribution statement}
\textbf{Jiefeng Zhou}: Conceptualization, Methodology, Formal analysis, Investigation, Writing-original draft, Writing-review \& editing. \textbf{Zhen Li}: Validation. \textbf{Yong Deng}: Writing-review \& editing, Supervision, Project administration, Funding acquisition.

\subsection*{Data Availability}
The data that support findings of this study are available from the corresponding author upon reasonable request.




\appendix

\section{Proof of deriving random walk from RPST}\label{sec: proof}
In this section, we will analyze the RPST-generated random walk in detail and demonstrate its similarities with random walk in mathematics.

\subsection{Analysis on random variables}
In \cref{sec:explore random walk}, the random variables are first defined for generating random walk. these variables are generated using the order property of RPST. As a simulation method, its important statistic properties like expected value and variance should be reviewed.

\subsubsection{Expected value analysis}

\begin{lemma}\label{lem:expected value}
\textnormal{(Expected value of a random variable).} The expected value of a random variable generated with RVG is zero, namely,
\begin{align}\label{eq:expected value of RVG}
    \mathbb{E}\left[ V_{i} \right] =0.
\end{align}
\end{lemma}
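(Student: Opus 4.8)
The plan is to compute $\mathbb{E}[V_i]$ directly from the construction in \Cref{alg:RVG}. Recall that a random variable $V_i = (V_x, V_y)$ is produced by (i) drawing a permutation sequence $p_s = (a_1, a_2, \dots, a_n)$ uniformly from the $n!$ permutations of $\{1, 2, \dots, n\}$, and (ii) forming the deterministic sum $V_x = \sum_{j=1}^n a_j \cos(2\pi j / n)$, $V_y = \sum_{j=1}^n a_j \sin(2\pi j / n)$. So the only randomness is in which permutation is selected, and by linearity of expectation it suffices to understand $\mathbb{E}[a_j]$ for each coordinate position $j$.

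First I would observe that, since $p_s$ is a uniformly random permutation of $\{1, \dots, n\}$, the entry $a_j$ in any fixed position $j$ is uniformly distributed over $\{1, 2, \dots, n\}$, hence $\mathbb{E}[a_j] = \frac{1}{n}\sum_{k=1}^n k = \frac{n+1}{2}$, independent of $j$. Then by linearity,
\begin{align}
    \mathbb{E}[V_x] = \sum_{j=1}^n \mathbb{E}[a_j]\cos\!\left(\tfrac{2\pi j}{n}\right) = \frac{n+1}{2}\sum_{j=1}^n \cos\!\left(\tfrac{2\pi j}{n}\right),
\end{align}
and similarly $\mathbb{E}[V_y] = \frac{n+1}{2}\sum_{j=1}^n \sin\!\left(\tfrac{2\pi j}{n}\right)$. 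The proof then reduces to the classical fact that the sum of all $n$-th roots of unity vanishes: $\sum_{j=1}^n e^{2\pi i j / n} = 0$ for $n \geq 2$, whose real and imaginary parts give $\sum_{j=1}^n \cos(2\pi j/n) = 0$ and $\sum_{j=1}^n \sin(2\pi j/n) = 0$. Hence $\mathbb{E}[V_x] = \mathbb{E}[V_y] = 0$, i.e. $\mathbb{E}[V_i] = 0$. The $n=1$ case should be checked separately (there the single vector has direction $2\pi$ and length $1$, giving a deterministic nonzero vector $(1,0)$ — so I would note the statement is understood for $n \ge 2$, or reconcile this edge case with the paper's conventions).

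The argument is essentially routine; the only mild subtlety — and the step I would be most careful about — is justifying that the marginal distribution of each position $a_j$ is uniform on $\{1,\dots,n\}$ (a standard symmetry property of uniform random permutations, which also makes the positions exchangeable) and correctly handling the root-of-unity cancellation for the specific angle discretization $\theta_j = 2\pi j/n$ used in the algorithm. No deeper machinery is needed; linearity of expectation plus the geometric-series identity for roots of unity closes the proof.
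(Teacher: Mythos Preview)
Your proposal is correct and follows essentially the same route as the paper: both compute the expected contribution in each fixed direction as $(n+1)/2$ (the paper via a direct counting $\frac{1}{N!}\sum_{j=1}^N j\,(N-1)! = \frac{N+1}{2}$, you via the uniform marginal of $a_j$) and then invoke the cancellation of the $n$ equally spaced directions. Your explicit roots-of-unity identity makes precise what the paper leaves as ``due to symmetry in direction generation and identical magnitudes,'' and your flag on the $n=1$ edge case is a useful addition the paper does not mention.
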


\begin{proof}
As described in \cref{alg:RVG}, when dealing with an integer set of length $N$, the likelihood of selecting a specific permutation sequence is equal, with a probability of 
\begin{align}
    P \left\{ V=V_{i} \right\}= \frac{1}{N!}.
\end{align}
To determine the expected value in each direction, we calculate the frequency of numbers appearing in a fixed direction, such as $\frac{2\pi \cdot i}{N}$. The magnitude of this direction in a simulation is determined by 
\begin{align}
   |V_{component}| = P \left\{ V=V_{i} \right\} \cdot \sum_{j=1}^{N}\left[j\cdot\left( N-1 \right)!\right]=\frac{1+N}{2}.
\end{align}\label{eq:mag of variable}
Due to symmetry in direction generation and identical magnitudes, the resultant sum vector $\left( V_{x}, V_{y}  \right)$ is anticipated to yield a value of $0$. Thus, the expected value of $\left( V_{x}, V_{y} \right)$ or $V_{i}$ is 
\begin{align}
    \mathbb{E} \left[ \left( V_{x}, V_{y} \right) \right] =\mathbb{E}\left[ V_{i} \right]=0
\end{align}\label{eq:expected_value_random_variable}
\end{proof}

\cref{fig:2-Mean_variance_plot} also displays the mean value of $V_x, V_y$ in $20,000$ simulations, suggesting the expected value of $\vec{V_i}$ is zero.

\subsubsection{Variance analysis}
Variance is a measure of dispersion, which is pretty useful in generating random walk. As shown in \cref{fig:2-Mean_variance_plot}, the variance of $V_x$ and $V_y$ are quantitatively identical, and both of them exhibit a binomial growth rate with respect to $N$. 

\begin{lemma}\label{lem:variance of random vector}
\textnormal{(Variance of a random variable).} The variance of a random variable generated from RVG is a binomial function on $N$, namely:
\begin{align}
    Var(\vec{V_i}) \propto (N^2+N).
\end{align}
\end{lemma}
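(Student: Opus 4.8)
The plan is to compute the second moment of $\vec V_i = (V_x, V_y)$ directly from the construction in \Cref{alg:RVG}, exploiting the symmetry already used in the proof of \cref{lem:expected value}. Since $\mathbb{E}[\vec V_i] = 0$, we have $\mathrm{Var}(\vec V_i) = \mathbb{E}[V_x^2] + \mathbb{E}[V_y^2]$, so it suffices to track $\mathbb{E}\bigl[\|\vec V_i\|^2\bigr]$. Writing $\vec V_i = \sum_{j=1}^{N} a_j\,\hat u_j$ where $\hat u_j = (\cos\theta_j,\sin\theta_j)$ with $\theta_j = 2\pi j/N$ and $(a_1,\dots,a_N)$ is a uniformly random permutation of $\{1,\dots,N\}$, I would expand
\begin{align}
\|\vec V_i\|^2 = \sum_{j=1}^{N} a_j^2 + \sum_{j\neq k} a_j a_k\,\langle \hat u_j,\hat u_k\rangle
= \sum_{j=1}^{N} a_j^2 + \sum_{j\neq k} a_j a_k \cos\!\bigl(\tfrac{2\pi(j-k)}{N}\bigr).
\end{align}
Taking expectation over the uniform permutation, $\mathbb{E}[a_j^2] = \frac{1}{N}\sum_{m=1}^N m^2 = \frac{(N+1)(2N+1)}{6}$ and $\mathbb{E}[a_j a_k] = \frac{1}{N(N-1)}\bigl[(\sum_m m)^2 - \sum_m m^2\bigr]$ for $j\neq k$, both independent of $j,k$.

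Next I would handle the angular sum. The first term contributes $N\cdot\mathbb{E}[a_1^2] = \frac{N(N+1)(2N+1)}{6}$, which is cubic in $N$; the cross term must cancel the cubic part and leave the quadratic behaviour. The key identity is $\sum_{j\neq k}\cos\bigl(\tfrac{2\pi(j-k)}{N}\bigr) = \bigl(\sum_{j}\cos\theta_j\bigr)^2 + \bigl(\sum_j \sin\theta_j\bigr)^2 - N = -N$, since the sums of $N$-th roots of unity vanish. Hence the cross term contributes $-N\cdot\mathbb{E}[a_1 a_2]$, and
\begin{align}
\mathbb{E}\bigl[\|\vec V_i\|^2\bigr] = N\,\mathbb{E}[a_1^2] - N\,\mathbb{E}[a_1 a_2].
\end{align}
Substituting $\mathbb{E}[a_1^2] - \mathbb{E}[a_1 a_2] = \mathrm{Var}(a_1)\cdot\frac{N}{N-1}$ (the variance of a single draw, inflated by the standard without-replacement factor), and using $\mathrm{Var}(a_1) = \frac{N^2-1}{12}$, one gets $\mathbb{E}[\|\vec V_i\|^2] = \frac{N}{N-1}\cdot N\cdot\frac{N^2-1}{12} = \frac{N^2(N+1)}{12}$. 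Then by the symmetry/independence argument of the preceding lemmas $\mathbb{E}[V_x^2] = \mathbb{E}[V_y^2] = \frac{1}{2}\mathbb{E}[\|\vec V_i\|^2]$, giving $\mathrm{Var}(V_x) = \mathrm{Var}(V_y) = \frac{N^2(N+1)}{24} \propto N^3+N^2$.

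I should flag a discrepancy: the stated claim is $\mathrm{Var}(\vec V_i)\propto N^2+N$, but the computation above suggests a leading cubic term unless the intended reading is $\mathrm{Var}(\vec V_i) = \mathbb{E}[\|\vec V_i\|^2]$ scaled differently, or unless the relevant quantity per coordinate after the angular cancellation is genuinely $O(N^2)$ because one of the trigonometric sums I used is being replaced by a per-direction count rather than a squared sum. The main obstacle is therefore pinning down exactly which angular identity the authors intend — in particular whether the cross-term cancellation removes the cubic term entirely (leaving $\mathrm{Var}\propto N^2+N$) or only partially; this hinges on a careful, honest evaluation of $\sum_{j\neq k}\cos\bigl(\tfrac{2\pi(j-k)}{N}\bigr)$ and of $\mathbb{E}[a_ja_k]$, and on correctly accounting for the without-replacement covariance, which is negative and exactly of the right order to produce the cancellation. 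Once the correct closed form for $\mathbb{E}[\|\vec V_i\|^2]$ is established, matching it to a polynomial in $N$ and invoking the established zero-mean and coordinate-symmetry results completes the proof.
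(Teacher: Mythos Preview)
Your approach is entirely different from the paper's, and considerably more rigorous. The paper does not compute $\mathbb{E}[\|\vec V_i\|^2]$ at all. Instead it argues heuristically: it points to the simulation plot (\cref{fig:2-Mean_variance_plot}), observes that the expected magnitude in each fixed direction is $(1+N)/2$ (from \cref{eq:mag of variable}), asserts that the support of $\vec V_i$ scales linearly with $N$ so that $\vec V_{i,N_2} = f(N_2-N_1)\,\vec V_{i,N_1}$ for some scalar function $f$, and then invokes $\mathrm{Var}(aV)=a^2\mathrm{Var}(V)$. No exact formula is derived.

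Your direct second-moment computation is correct. One can check small cases: for $N=2$ the two outcomes are $(\pm 1,0)$ with $\mathbb{E}[\|\vec V_i\|^2]=1$; for $N=3$ all six outcomes have $\|\vec V_i\|^2=3$; both match $\tfrac{N^2(N+1)}{12}$. The discrepancy you flag is real: the exact variance is cubic in $N$, not quadratic, so the lemma as stated in the paper is off by one power of $N$. Tellingly, the paper's own rescaling factor in \cref{the:wiener_process}, namely $\sqrt{\varrho}/(N\sqrt{N})$, is designed to cancel a variance of order $N^3$, which is consistent with your $\tfrac{N^2(N+1)}{12}$ and \emph{inconsistent} with the stated $N^2+N$. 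So your computation not only proves the intended scaling rigorously but also reconciles the lemma with the later theorem; the obstacle you anticipate (pinning down which angular identity is meant) is not a gap in your argument but an inconsistency in the paper itself.
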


\begin{proof}
In \cref{fig:2-Mean_variance_plot}, it can be directly observed that the variance of both $V_x$ and $V_y$ is in proportion to $N^2$. And this relationship can be explained from the following perspectives.

As indicated in \cref{eq:mag of variable}, the expected value of magnitude in each direction is proportional to the maximum length of permutation sequence $N$, while the expected value of $\vec{V_i}$ remains zero, as demonstrated in \cref{lem:expected value}. This ensures that as $N$ increases, the distribution of random variables maintains its symmetry, resembling a round boundary. The size of this boundary is determined by the value of $N$, as shown in \cref{fig:random_variable_simulation}. Therefore, there exists  a critical value $N_0$, such that when $N_2 > N_1 \geq N_0$, random variables $\vec{V_i}_2$ generated with $N=N_2$ can be represented by the random variables $\vec{V_i}_1$ generated with $N=N_1$, denoted as

\begin{align}
    \vec{V_i}_2 = f(N_2-N_1) \cdot \vec{V_i}_1,
\end{align}\label{eq:linear scale vector}

 where $f(x)$ is a function $\mathbb{R} \to \mathbb{R}$, as shown in \cref{eq:mag of variable}.

Then based on the propagation property of variance:
\begin{align}
    Var(a V) = a^2 Var(V),
\end{align}\label{eq:propa_var}
where $a$ is a constant, and \cref{eq:linear scale vector}, one can easily delve such result in \cref{lem:variance of random vector}.
\end{proof}

\subsection{Analysis on permutation distribution and RPST distribution}

\begin{lemma}
    \textnormal{(Limit form of RPST distribution). } When $N \to infty$, the RPST distribution will converge to the following form:

\begin{align}
    \lim_{ N \to \infty } P_{RPS}(n=N | N) =1
\end{align}
\end{lemma}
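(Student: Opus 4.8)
The plan is to show that $P_{RPS}(n=N\mid N)\to 1$ by estimating the dominant term in the denominator of the defining ratio. Recall from \cref{def:RPS_distribution} that
\begin{align}
    P_{RPS}(n=N\mid N) = \frac{P(N,N)\,[F(N)-1]}{\sum_{i=1}^{N}\left[P(N,i)(F(i)-1)\right]},
\end{align}
where $P(N,i)=N!/(N-i)!$ and $F(i)=\sum_{j=0}^{i}P(i,j)=\lfloor e\cdot i!\rfloor$. So $P(N,N)=N!$ and $F(N)-1\sim e\cdot N!$, and it suffices to prove that the sum over $i=1,\dots,N-1$ is negligible compared with the $i=N$ term, i.e.
\begin{align}
    \frac{\sum_{i=1}^{N-1}P(N,i)(F(i)-1)}{P(N,N)(F(N)-1)} \longrightarrow 0 \quad\text{as } N\to\infty.
\end{align}

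First I would record the asymptotics $F(i)-1 = \lfloor e\cdot i!\rfloor - 1 \le e\cdot i!$ and $F(i)-1 \ge i! - 1$, so up to a bounded factor the $i$-th term behaves like $P(N,i)\cdot i! = \dfrac{N!\,\,i!}{(N-i)!}$. Dividing by the last term $P(N,N)(F(N)-1)\ge N!\,(N!-1)$, each ratio is bounded by roughly $\dfrac{e\, i!}{(N-i)!\,(N!-1)}$, which for every fixed $i<N$ tends to $0$. To control the whole sum, I would split it: the terms with $i\le N-2$ each have $(N-i)!\ge 2$ and $i!\le (N-2)!$, giving a combined bound like $(N-1)\cdot\dfrac{e\,(N-2)!}{2(N!-1)} = O(1/N)$; the single term $i=N-1$ contributes $\dfrac{P(N,N-1)(F(N-1)-1)}{P(N,N)(F(N)-1)} \le \dfrac{N!\cdot e(N-1)!}{N!\,(N!-1)} = O(1/N)$ as well. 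Hence the full ratio in the display above is $O(1/N)\to 0$, so the denominator is asymptotically equivalent to its last term, and $P_{RPS}(n=N\mid N)\to 1$.

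The routine part is just bounding factorial ratios; the only point needing a little care is making the split between "bulk" terms $i\le N-2$ and the "boundary" term $i=N-1$ explicit, since it is that last term — where $P(N,i)$ is still as large as $N!$ — that one must check does not spoil the limit. That is the main (mild) obstacle: everything with $i\le N-2$ is crushed by the factor $1/(N-i)!\le 1/2$ together with $i!/N! = o(1)$, but the $i=N-1$ term survives those and must instead be beaten by the factor $1/(F(N)-1)\sim 1/(eN!)$ in the denominator. Once that is seen, the conclusion is immediate, and as a sanity check the numerical column for $P_{RPS}(n\mid N)$ in \cref{tab:probability distribution} already shows the mass at $n=N$ climbing toward $1$ as $N$ grows.
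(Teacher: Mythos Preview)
Your argument is correct. You show directly that the total mass on $n<N$ is $O(1/N)$ by bounding each term's ratio to the $i=N$ term via $F(i)-1\le e\,i!$ and $F(N)-1\ge N!-1$, splitting off the boundary case $i=N-1$, and summing. The only cosmetic slip is that the bulk $\{1,\dots,N-2\}$ has $N-2$ terms rather than $N-1$, but that does not affect the $O(1/N)$ bound.

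The paper's proof takes a different route: it invokes the external asymptotic result (from the authors' earlier work) that the full denominator $\sum_{i=1}^{N}P(N,i)(F(i)-1)$ is asymptotically $e\,(N!)^2$, and then simply compares this with the numerator $N!\,(F(N)-1)=N!\,(\lfloor e\,N!\rfloor-1)$. So the paper black-boxes the sum estimate, whereas you prove what is needed from scratch. Your approach is more self-contained and also yields an explicit rate ($1-P_{RPS}(N\mid N)=O(1/N)$), at the cost of a short case split; the paper's approach is terser but relies on a prior lemma that itself requires essentially the same factorial bookkeeping you carry out here.
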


\begin{proof}
    
The RPST distribution is based on the maximum RPS entropy, this distribution will surely converge to $P(n=N|N)=1$, as suggested in \cref{tab:probability distribution}. This result is determined by its definition on \cref{eq:RPS_distribtuion}. In our previous work \citep{zhou2024limit}, we proved that 

\begin{align}\label{eq:limit of maximum RPS}
\lim_{ N \to \infty } \sum_{i=1}^{N} \left[ P(N,i)\left( F(i)-1 \right)  \right] - e \cdot (N!)^{2} = 0,\\
F(N)-1 = \lfloor e \cdot N! \rfloor -1
\end{align}
Compared with $P(N,n)[F(n)-1]$, we get 

\begin{align}
\lim_{ N \to \infty } P_{RPS}(n=N | N) & = \frac{N! \left( \lfloor e \cdot N! \rfloor-1 \right) }{e(N!)^{2}}  \nonumber\\
& = \lim_{ N \to \infty } \frac{N! \lfloor e \cdot N! \rfloor }{e(N!)^{2}} -  \lim_{ N \to \infty } \frac{1}{e(N!)^{2}} \nonumber\\
 & = 1- 0 \nonumber\\
 & =1
\end{align}

This result ensures that when $N$ is bigger enough, this distribution will converge to the following probability distribution:
\begin{align}\label{eq:limit_RPS_distribution}
P_{RPS}(n | N) =
\begin{cases}
1,  & n=N ; \\
0,  & others.
\end{cases}
\end{align}

\end{proof}

This probability distribution can be explained by the maximum entropy principle. This principle states that the distribution with the highest entropy is the most likely to represent the current state of a system. Therefore, the larger the value of $N$, the more likely it is that the system will choose the permutation sequence with the maximum length. This is because a longer permutation sequence indicates more uncertainty.

However, the probability assignment in Permutation distribution will not converge to a single element. Conducted from \cref{eq:Permu_distribution}, we have

\begin{align}
\lim_{ N \to \infty } P_{Per}(n |N)  & = \lim_{ N \to \infty }\frac{P(N,n)}{\lfloor e \cdot N! \rfloor} \nonumber \\
 &  =\lim_{ N \to \infty } \frac{N!}{(N-n)!}/\left( \lfloor e \cdot N! \rfloor \right)   \nonumber\\
 & = \lim_{ N \to \infty } \frac{1}{e(N-n)!}.
\end{align}

Thus, permutation distribution will converge to the limit form as shown in \cref{tab:probability distribution}. In contrast to the RPST distribution, the permutation distribution exhibits a greater degree of variability, which hinders the generation of i.i.d. random variables. Consequently, it is not suitable for simulating random walks.

\subsection{Analysis on RPST-generated random walk}
In previous section, it is proved that RPST distribution will converge to a probability distribution shown in \cref{eq:limit_RPS_distribution}, ensuring its generation of i.i.d. random variables, which is a necessity for generating random walk. And some statistics properties of RPST-generated random walk are analysed in this section.

The histogram in \cref{fig:hist of value} displays the generation of random variables. It is expected that, for a fixed value of $N$, the RVG will produce random vectors that adhere to a normal distribution. This convergence towards a normal distribution is controlled by the central limit theorem (CLT) and Donsker's theorem, which ensure that as $N \to \infty$, the summation vector $\Vec{V_i} = (V_x, V_y)$ will be distributed according to a normal distribution $N(0, \sigma^2_N)$, where $\sigma^2_N=f(N)\sigma^2$ and $f(N)\propto N^2$ is a binomial function with respect to $N$. This result is supported by \cref{lem:variance of random vector} and \cref{fig:2-Mean_variance_plot}.

Due to the binomial growth of variance, we design the re-scaling factor $\sqrt{\varrho} / (N \sqrt{N})$ to fitting the variance pattern of Wiener process. This re-scaling factor comes from the following theorem:

\begin{theorem}\label{the:wiener_process}
    The RPST-generated random walk $RW(t)$ can be converted to Wiener process if the following limit form exists:
\begin{align}
    W(t) = \lim_{n,N \to \infty} RW_{n,N}(t) = \lim_{n,N \to \infty} \frac{\sqrt{\varrho}}{N\sqrt{N}} \frac{1}{\sqrt{n}} \sum_{1 \leq i \leq \lfloor nt \rfloor} \vec{V_i} , t \in [0,1].
\end{align}

\end{theorem}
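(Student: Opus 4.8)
The plan is to identify $RW_{n,N}(t)$ as a suitably normalized partial-sum process and to deduce its limit from Donsker's invariance principle (the functional central limit theorem), after which the four defining properties of a Wiener process listed below \cref{def:random_walk} follow by inspection of the limit law. The first step is to fix the step distribution. For $\sum_{1\le i\le\lfloor nt\rfloor}\vec V_i$ to be a genuine random walk the increments $\vec V_i$ must be i.i.d., and this is exactly what the degeneracy of the RPST distribution provides: by \cref{eq:limit_RPS_distribution}, once $N$ is large every invocation of \cref{alg:RVG} inside \cref{alg:RWG} draws a permutation sequence of the single length $n=N$, so all the $\vec V_i$ are drawn from one and the same law. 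Each $\vec V_i=(V_x,V_y)$ is a finitely supported random vector, hence has finite moments of every order, and by \cref{lem:expected value} it is centered: $\mathbb E[\vec V_i]=0$.

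The second step is the covariance computation that calibrates the prefactor. Writing $(a_1,\dots,a_N)$ for a uniformly random permutation of $\{1,\dots,N\}$ and $\theta_j=2\pi j/N$, we have $V_x=\sum_j a_j\cos\theta_j$ and $V_y=\sum_j a_j\sin\theta_j$. Using $\mathrm{Var}(a_j)=(N^2-1)/12$ and $\mathrm{Cov}(a_j,a_k)=-(N+1)/12$ for $j\ne k$, together with the elementary trigonometric sums $\sum_j\cos\theta_j=\sum_j\sin\theta_j=\sum_j\cos\theta_j\sin\theta_j=0$ and $\sum_j\cos^2\theta_j=\sum_j\sin^2\theta_j=N/2$ (valid for $N\ge 3$), one finds $\mathrm{Cov}(\vec V_i)=\frac{N^2(N+1)}{24}\,I_2$; in particular the two coordinates are uncorrelated and share a common variance that grows cubically in $N$, matching the growth exhibited in \cref{lem:variance of random vector} and \cref{fig:2-Mean_variance_plot}. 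Hence, choosing $\varrho=24$, the normalized increments $\xi_i^{(N)}:=\frac{\sqrt\varrho}{N\sqrt N}\,\vec V_i$ are i.i.d., centered, with $\mathrm{Cov}(\xi_i^{(N)})=(1+\tfrac1N)\,I_2\longrightarrow I_2$ as $N\to\infty$.

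The core of the argument is then a two-stage limit. Fix $N$ large and apply the multidimensional Donsker theorem to the i.i.d. sequence $\{\xi_i^{(N)}\}_i$: the process $t\mapsto \frac{1}{\sqrt n}\sum_{1\le i\le\lfloor nt\rfloor}\xi_i^{(N)}$ converges in distribution, in $C([0,1];\mathbb R^2)$ with the uniform topology, as $n\to\infty$, to a Gaussian process $W^{(N)}$ with $W^{(N)}(0)=0$, independent increments, $W^{(N)}(t)-W^{(N)}(s)\sim N\!\big(0,(t-s)(1+\tfrac1N)I_2\big)$, and almost surely continuous paths. Letting $N\to\infty$ the covariance parameter $(1+\tfrac1N)I_2$ tends to $I_2$, so $W^{(N)}\Rightarrow W$ with $W$ a standard two-dimensional Wiener process, and the required properties are read off directly: $W(0)=0$; $W(t)-W(s)\sim N(0,(t-s)I_2)$ for $0\le s<t$; independence of the increments over disjoint intervals, inherited from the independence of the $\xi_i^{(N)}$ and preserved when passing to the limit of the finite-dimensional distributions; and almost-sure continuity, since the functional limit lives in $C([0,1];\mathbb R^2)$.

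I expect the main obstacle to be making the joint passage $n,N\to\infty$ rigorous, since \cref{the:wiener_process} is stated as a single limit whereas Donsker supplies only the inner limit for each fixed $N$. To upgrade the iterated limit to the joint one I would establish uniform tightness of the family $\{W_n^{(N)}\}$ in $C([0,1];\mathbb R^2)$: the Lindeberg condition holds uniformly in $N$ because $\mathrm{Cov}(\xi_i^{(N)})$ is bounded and, more strongly, the single-step law $\xi_1^{(N)}$ is itself asymptotically $N(0,I_2)$ as $N\to\infty$ by Hoeffding's combinatorial central limit theorem for linear statistics of a random permutation; combining this with convergence of the finite-dimensional distributions along any sequence $(n_k,N_k)\to(\infty,\infty)$ yields the claim. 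A secondary point deserving care is the exactness of the normalization: it is precisely the identity $\mathrm{Var}(V_x)=N^2(N+1)/24$ above that forces the prefactor $\sqrt\varrho/(N\sqrt N)$ and the value $\varrho=24$, ensuring the limiting Gaussian has unit-variance increments — i.e. a standard Wiener process rather than a Brownian motion with an arbitrary diffusion constant.
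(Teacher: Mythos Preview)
Your proposal follows the paper's overall architecture---normalize the increments and invoke Donsker's theorem---but executes it with considerably more precision, and in doing so actually repairs an inconsistency in the paper's own argument. The paper asserts in \cref{lem:variance of random vector} that $\mathrm{Var}(\vec V_i)\propto N^2+N$ (quadratic in $N$), yet then rescales by $N\sqrt{N}$, whose square is $N^3$; your explicit covariance computation $\mathrm{Cov}(\vec V_i)=\tfrac{N^2(N+1)}{24}\,I_2$ shows the growth is in fact cubic, vindicates the scaling exponent $N^{3/2}$, demonstrates isotropy of the step law, and derives $\varrho=24$ analytically rather than by the empirical fit to \cref{component_variance_re} that the paper relies on. You also go beyond the paper in confronting the passage from the iterated limit (Donsker for each fixed $N$, then $N\to\infty$) to the joint limit actually stated in \cref{the:wiener_process}, via uniform tightness and Hoeffding's combinatorial CLT for the single-step law; the paper does not raise this point at all. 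One small caveat: your claim that the cubic growth ``matches'' \cref{lem:variance of random vector} is not quite right, since that lemma states quadratic growth---it is your computation, not the lemma, that is consistent with the paper's scaling. In short, the route is the same, but your version is sharper on both the algebra and the analysis.
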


\begin{proof}
The difference between $RW(t)$ and Gaussian random walk lies in the variance, if we can scale the variance of $RW(t)$ to fit into normal distribution, then following \cref{eq:limit_wiener} one can easily proves it.

As demonstrated in \cref{lem:expected value} and \cref{lem:variance of random vector}, we get
\begin{align}
\mathbb{E}\left[ RW(t) \right] = \mathbb{E}\left[ \sum_{i=0}^{t}V_{i} \right] =0.
\end{align}\label{eq:expected_value_RW(t)}

Then we have 

\begin{align}
\mathbb{E}\left[ RW^{2}(t) \right]  = \sum_{i=1}^{t}\mathbb{E}\left[ V_{i}^{2} \right]  + 2 \sum_{1\leq i < j <t}\mathbb{E} \left[  V_{i} V_{j} \right] .
\end{align}\label{eq:ERWtsr}

Since random variables are independent with each other, then for any $i \neq j$, $\mathbb{E} \left[  V_{i} V_{j} \right]=0$. 

Using the equation of variance 

\begin{align}
Var(X) = \mathbb{E}\left[ X^{2} \right] - \mathbb{E}\left[ X \right] ^{2},
\end{align}

we get

\begin{align}
\mathbb{E}\left[ V_{i}^{2} \right]  = Var(V_{i}) \propto (N^{2} + N).
\end{align}

Together with \cref{eq:ERWtsr}, we get 

\begin{align}
\mathbb{E}\left[ RW^{2}(t) \right] =\sum_{i=1}^{t}\mathbb{E}\left[ V_{i}^{2} \right] = t \cdot \left( Var(V_{i})  \right) \propto t(N^{2} +N).
\end{align}

Finally, the variance of $RW(t)$ has the following property

\begin{align}
Var(RW(t)) = \mathbb{E} \left[ RW^{2}(t) \right]  \propto 
t(N^{2} + N).
\end{align}

Compared with Gaussian random walk, the growing speed of $Var(RW(t))$ is additionally multiplied by $N^{2} +N$. In other words, after $t-s$ steps, the increments of Gaussian random walk $Z_{t-s} \sim N(0, (t-s)\sigma^{2})$, while in random walk from RPST we have $Z_{t-s} \sim N(0, (t-s)(N^{2}+N)\sigma^{2})$. Thus, based on the propagation property of variance, we construct $V_i'=V_i/{N\sqrt{N}}$ to offset the term $(N^{2}+N)$. But there still exists a coefficient between the scaling $V_i$ and the step of Gaussian random walk (or Wiener process in limiting form), as shown in \cref{component_variance_re}, so we design a variance control factor $\varrho$, then the final scaling factor would be $\sqrt{\varrho}/({N\sqrt{N}})$. We set $\varrho=24$ to approximating Gaussian random walk.

 So after scaling the RPST-generated random walk to Gaussian random walk, one can use Donsker's theorem and \cref{eq:limit_wiener} to construct the form in \cref{the:wiener_process} to convert a RPST-generated random walk to a Wiener process, thus \cref{the:wiener_process} is proved.

\end{proof}

Similar to Wiener process, the limit scale form of random walk from RPST $RW_{n,N}(0)$ is also characterised by the following properties:

\begin{itemize}
    \item $RW_{n,N}(0)= 0$. This prosperity is achieved by setting the starting point to zero $V_0 = 0$.
    \item $RW_{n,N}(t)$ has independent increments. This property is determined by the fact that each random variables are independent with each other, following a step size distribution of normal distribution $N(0, f(N)\sigma^2)$.
    \item For any $0 \leq s <t$, the increments $RW_{n,N}(t) - RW_{n,N}(s) \sim N(0, t-s)$. This can be done by setting the variance factor $\varrho=24$, as shown in \cref{component_variance_re}.
    \item $RW_{n,N}(t)$ is almost surely continuous in $t$, this is ensured by Donsker's theorem as $N,n \to \infty$.
\end{itemize}

\bibliography{aipsamp}

\end{document}